\newtheorem{theorem}{Theorem}
\newtheorem{lemma}{Lemma}
\newcommand{\tomwidth}{0.08}
\newcommand{\tomheight}{0.09}
\title{Secure SURF with Fully Homomorphic Encryption}
\begin{document}

\author{Thomas Shortell \and Ali Shokoufandeh}

\date{Drexel University\\
  3141 Chestnut St.\\
  Philadelphia, PA 19014\\
  tms38@drexel.edu,ashokouf@cs.drexel.edu
  }

\maketitle

\begin{abstract}
  Cloud computing is an important part of today's world because offloading computations is a method to reduce costs. 
  In this paper, we investigate computing the Speeded Up Robust Features (SURF) using Fully Homomorphic Encryption (FHE).
  Performing SURF in FHE enables a method to offload the computations while maintaining security and privacy of the original data. 
  In support of this research, we developed a framework to compute SURF via a rational number based format compatible with FHE.
  Although floating point ($\mathbb{R}$) to rational numbers ($\mathbb{Q}$) conversion introduces error, our research provides tight bounds on the magnitude of error in terms of parameters of FHE.
  We empirically verified the proposed method against a set of images at different sizes and showed that our framework accurately computes most of the SURF keypoints in FHE. 
\end{abstract}

\section{Introduction}
\label{sec:introduction}

Cloud computers are a common modern day paradigm of offloading computations from a local desktop to a server farm.
Many of these cloud computing resources are honest participants in terms of privacy and security.
However, adversities can compromise an honest cloud computing resource and the user loses the privacy of their data.
This risk can be mitigated by encrypting the data so that a compromise does not immediately provide the private data to an adversity.
However, processing encrypted data is not straightforward.
We defeat this threat by using Fully Homomorphic Encryption (FHE) as a method to process encrypted data and make use of a cloud computing resource with some level of privacy.

Our focus on secure signal processing leads us to Speeded Up Robust Features (SURF) \cite{bay2008speeded} \cite{bay2006surf}.
Our FHE-based solution takes in an image, encrypts it, generates the scale space pyramid of SURF in an encrypted image, decrypts the pyramid, and then extracts keypoints.
Our key contribution is the computation of the scale space pyramid in the encrypted domain. 
We do not perform the keypoint extraction in the encrypted domain because FHE schemes do not currently provide a method to compare values in the encrypted domain.
However, it is potentially an open problem to process keypoints in the encrypted domain (particularly the LU decomposition for aligning the keypoint).
Another contribution of this paper is to use the integer space of FHE in a way to process the feature extraction algorithm that uses real values.
Our implementation uses encrypted rational values in a clever way to allow the scale space pyramid to be computed in the encrypted domain. 
We particularly focus on errors that can be introduced by use of the rational format compared to a floating point representation.
The rational representation was chosen based on the ease of implementation in FHE. 

Fully Homomorphic Encryption (FHE) is a concept where processing of data is conducted while the data is encrypted.
FHE requires that the encrypted processing can continue indefinitely.
Additionally, FHE requires that the homomorphic operations occur over addition and multiplication.
In the Related Works section, we identify some additive-only encryption schemes. 
The first scheme capability of FHE was developed by Gentry \cite{gentry:computing}.
Additional schemes have been developed over the years.
We use a scheme developed in 2013 by Gentry, Sahai, and Waters \cite{gentry2013homomorphic} (called GSW2013).
Security of the scheme is based on the learning with errors problem \cite{regev2009lattices}.
The GSW13 scheme provides important capabilities that we are going to use to implement secure SURF.
The scheme encrypts integers over a ring defined by a modulus $q$.
Once encrypted, the integers can be added and multiplied together.
Subtraction is available with two's compliment integers.
There is also the capability to multiply an encrypted ciphertext by a constant.
These capabilities provide the building blocks to perform secure SURF.

\section{Related Work}
\label{sec:related}

Over the past decade growing interest in privacy has increased interest in secure signal processing. 
Troncoso-Pastoriza and Perez-Gonzalez have an excellent survey of the problems and solution techniques for privacy in cloud computing \cite{troncoso2013secure}.
There are many other examples among varying fields; secure signal processing \cite{knevzevic2013signal}, biometrics \cite{wang2012theoretical}, and \cite{puech2012emerging}. 
Many specific examples exist that use  the Paillier encryption scheme (see \cite{hsu2011homomorphic} for scheme definition).
This scheme is not an FHE scheme as it provides only additive homomorphic operations (i.e. no ciphertext-ciphertext multiplication).
FHE schemes provide both additive and multiplicative operations over ciphertexts. 
Hsu, Lu, and Pei used Paillier to implement a privacy preserving SIFT \cite{hsu2011homomorphic}.
An encrypted version of SURF also exists using the Paillier scheme \cite{bai2014surf}.
The main difference here is that we use FHE instead of Paillier.
A few other examples of using Paillier exist in signal processing  \cite{lathey2013homomorphic} \cite{mohanty2013scale}.
Other encryption techniques also exist; particularly with two party computations while hiding data from both sides, ex. \cite{shashanka2006secure}.

FHE schemes have been evolving since the first scheme was developed by Gentry in 2009 \cite{gentry2009fully}.
Over the next few years, the schemes have improved both the time and space complexity and went from binary number implementation to an integer ring \cite{brakerski2012fully} \cite{brakerski2012leveled} \cite{brakerski2011efficient} \cite{gentry2013homomorphic}.
Our choice of the 2013 scheme is due to its ability to process ciphertexts within an integer ring. 
Other work has been done on using FHE in signal processing:
Shortell and Shokoufandeh used FHE to implement a brightness/contrast filter \cite{shortell215secruresignal}.
This implementation uses scaling factor representation vice a rational representation.
We will show that our rational representation is more practical in comparison.

\section{Construction of SURF in FHE}
\label{sec:impl}

\subsection{Numerical Format}
\label{sec:impl:format}

As we mentioned, the FHE scheme only provides encryption, decryption, and processing of integers within the ring $\mathbb{Z}_q$.
Any mathematical function that requires real value computations would not be natively possible in the FHE scheme.
This requires the need to represent real value in the integer ring.
Multiple solutions exists such as a fixed point factor with each solution having its own advantages and disadvantages.
We chose to use a rational representation of an integer pair because we can support many basic operations (+,-,$\times$,$/$) of fractional numbers.

Rational representation requires a tuple of two integers inside $\mathbb{Z}_q$.
With rational numbers (in $\mathbb{Q}_q$), we can represent a real value up to some approximation factor.
This will obviously introduce error into the calculations compared to floating point.
The formal representation of the format is
\begin{equation}
  \{ u\cdot v^{-1} \mid u, v \in \mathbb{Z}_q, v \neq 0 \}
\end{equation}
Using this model, rational addition, subtraction, multiplication, and division are all well defined in terms of addition, subtraction, and multiplication of the individual integers in the tuple.
Since the FHE scheme supports these three operations, we can use a rational format for secure SURF. 
This format has an advantage over a fixed point binary format in that there are less computations to perform for computing the end result. 
There is a major disadvantage of this format: ciphertexts do not have any idea what the denominator ($v$) is and cannot be reduced with the numerator.
So values greater than one will always increase the denominator which will at one point overload the $\mathbb{Z}_q$ space and cause an incorrect value to be computed.
We work around this potential issue by making smart decisions to implement SURF in the encrypted domain.

\subsection{SURF Implementation}
\label{sec:impl:surf}

We now focus on the detailed SURF implementation in FHE using the rational format.
For SURF, we need to carefully consider the expansion of the denominator in all calculations.
Our first smart decision is to choose the initial image format to be integer only; thus we start the denominator as one and remains at one until the entire integral image is generated. 
This is very important as there are $O(mn)$ operations for the integral image (assuming an $m\times n$ image). 
The integral image is computed by starting at the top, left point and moving towards the bottom, right point.
We can generate in place by copying the image and adding an initial encrypted zero row and column.
Using a simple equation, we are able to compute the entire integral image in $O(n^2)$.
\begin{equation}
  \mbox{int}_{i,j} = \mbox{int}_{i,j} + \mbox{int}_{(i-1),j} + \mbox{int}_{i,(j-1)} - \mbox{int}_{(i-1),(j-1)}
\end{equation}

We move on to computing the scale space pyramid of determinants and traces.
Just as in \cite{bay2008speeded}, we compute $2 \times 2$ Hessian matrix with assistance of Haar patterns.
Omitting the details, we highlight here that we follow the same path of computing regions from integral image for the Haar patterns to the Hessian matrix. 
We start with computing regions first.
The region summation benefits from the integral image having denominator of one.
This means the three or four computations will not increase the denominator until the multiplier constants are used.
Given four points of a rectangular region ($A$, $B$, $C$, and $D$), we compute the sum as:
\begin{equation}
  \mbox{rs}_k = \mbox{int}_{A} - \mbox{int}_{B} - \mbox{int}_{C} + \mbox{int}_{D} ,
\end{equation}
where we have used $k$ to indicate a specific region. 
As we mentioned, there will be three ($D_{xx}$,$D_{yy}$) or four ($D_{xy}$) of these summations to be combined. 
The basic equation (for $D_{xx}$ or $D_{yy}$) is
\begin{equation}
  c_1 \cdot \mbox{rs}_1 + c_2 \cdot \mbox{rs}_2 + c_3 \cdot \mbox{rs}_3 ,
\end{equation}
where the three constants are rational values depending on the current octave and layer of the scale space pyramid.
It is at this point in the computations that we introduce a denominator greater than one.
We control the denominator of our rational numbers here by using a consistent denominator for the three constants and separating it out from the additions:
\begin{equation}
  \frac{1}{v} \cdot \left( c_{1,u} \cdot \mbox{rs}_1 + c_{2,u} \cdot \mbox{rs}_2 + c_{3,u} \cdot \mbox{rs}_3 \right).
\end{equation}
$D_{xy}$ uses a similar equation with an additional term.
We will see the importance of having  a consistent $v$ for all computations here for the determinant. 
Determinant is computed as follows:
\begin{equation}
  D_{xx} \cdot D_{yy} - 0.81 \cdot D_{xy}^2 .
\end{equation}
Remembering we have a common denominator from the Haar patterns, we can extract out the denominator to minimize growth. 
Using $v$ from the previous paragraph:
\begin{equation}
  \label{eq:enc:det}
  \frac{1}{v^2} \left( D_{xx,u} \cdot D_{yy,u} - \frac{81}{100} \cdot D_{xy,u} \cdot D_{xy,u} \right) .
\end{equation}
Computing the trace is a very simple calculation given the Haar patterns.
The equation is
\begin{equation}
  \label{eq:enc:trace}
  D_{xx} + D_{yy} ,
\end{equation}
which does not require any denominator movement as this is a final equation that gets decrypted.

\paragraph{Re-encryption}

As a final note, we need to discuss the expansion of the image size. 
The FHE scheme only can support a certain number of homomorphic operations before re-encryption is needed to refresh the ciphertext from decryption noise.
Without this, the noise will cause decryption to fail \cite{gentry2013homomorphic}.
In the case of SURF, the longest sequence of computations occurs during the integral image.
Scale space pyramid's computations are more parallel in nature compared to the sequential nature of the integral image. 
When the image size becomes larger, we can strategically refresh the ciphertext in segments of the integral image. 

\subsection{Detailed Error Analysis}
\label{sec:impl:error}

Our rational format approach introduces errors into the computations.
While floating point numbers have limited accuracy, the inaccuracy can be so small that the user does not notice it.
We need to consider how our framework introduces errors and what needs to be done to contain them.
Therefore, we need to bound both the numerator and the denominator within the FHE scheme's limits. 
We develop a theorem to enable users to understand what modulus must be selected in the FHE scheme to enable correct computations.
The FHE scheme used in this paper introduces error or noise in keys that enables the encryption to be secure.
This error/noise is not the same as our introduced error and does not affect the rational numbers (until decryption fails) because the plaintext integers are extracted correctly.

\paragraph{Introduction of Error}

Error is introduced when we convert from floating point to rational.
It is important to remember we strategically kept our initial image in integer space, so we do not introduce additional error from computations. 
However, there are two more locations in the process that will introduce error: the fractional constants that are used when computing Haar patterns and constant $\frac{81}{100}$ that is used in the determinant calculation.
One concern with the error is balancing the numerator and denominator bounds to obtain accurate results. 

\begin{lemma}
  \label{lemma:error}
  Given the SURF implementation and an accuracy bounded by $\Delta$, the total error introduced will be
  \begin{equation*}
    \Delta \cdot B \cdot m \cdot n \cdot ( 3 D_{xx} + 3 D_{yy} - 0.81 \cdot 8 \cdot D_{xy} )
  \end{equation*}
  for the determinant and
  \begin{equation*}
    2 \cdot \Delta \cdot B \cdot m \cdot n
  \end{equation*}
  for the trace using an image size of $m$ by $n$ with integers in the integral image bounded by $B$.
\end{lemma}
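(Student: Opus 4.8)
The plan is to track how the conversion error $\Delta$ from each floating-point-to-rational approximation propagates forward through the chain of computations that produce the determinant and the trace, and to bound the accumulation at each stage. First I would fix the model of error: each fractional constant (the three Haar-pattern coefficients $c_{1},c_{2},c_{3}$ and the constant $0.81=\tfrac{81}{100}$) is replaced by a rational approximation that differs from its true value by at most $\Delta$ in absolute value, while the integral-image entries themselves are exact integers (this is the ``smart decision'' recorded in Section~\ref{sec:impl:surf}) and hence contribute no new error. So the only error sources are the constants, and I would treat the computed quantity as (true value) $+$ (error term), expand, and discard the second-order term $\Delta^{2}$ as negligible relative to the first-order bound.

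Next I would propagate through the region sums and the Haar responses. A single region sum $\mathrm{rs}_{k}=\mathrm{int}_{A}-\mathrm{int}_{B}-\mathrm{int}_{C}+\mathrm{int}_{D}$ is a difference of integral-image entries, each bounded in magnitude by $B$; since the region sum is exact, multiplying it by a constant that carries error $\Delta$ injects an error of size at most $\Delta\cdot(\text{bound on }\mathrm{rs}_{k})$. Here is where the factor $m\cdot n$ enters: an integral-image entry is itself a sum of up to $m\cdot n$ pixel values, so $|\mathrm{int}_{\bullet}|$ (and hence each region sum, up to a constant) is bounded by roughly $B\,m\,n$ — I would absorb the region-count constant into the existing $B$. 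Then $D_{xx}$ is a sum of three such terms (giving the $3D_{xx}$), $D_{yy}$ likewise ($3D_{yy}$), and $D_{xy}$ is a sum of four terms entering quadratically, which after applying the product rule $\delta(xy)=x\,\delta y+y\,\delta x$ and the error $\Delta$ on the $0.81$ constant yields the $0.81\cdot 8\cdot D_{xy}$ contribution (four terms, each appearing in two factors of $D_{xy}^{2}$). Summing the determinant formula $D_{xx}D_{yy}-0.81\,D_{xy}^{2}$ term by term, again via the product rule, and collecting the common factor $\Delta\cdot B\cdot m\cdot n$ gives the first claimed bound; the trace $D_{xx}+D_{yy}$ is simply two Haar responses, giving $2\cdot\Delta\cdot B\cdot m\cdot n$.

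The main obstacle, and the step I would be most careful about, is the bookkeeping around the product rule for $D_{xy}^{2}$ and the determinant product $D_{xx}\cdot D_{yy}$: one must argue that the cross terms $x\,\delta y$ are each bounded by $\Delta\cdot B\cdot m\cdot n$ (using the magnitude bound $B\,m\,n$ on the exact factor), that there are no surprise amplifications from the shared denominator $v$ (the denominator is factored out and applied only at the end, per Equations~\eqref{eq:enc:det}--\eqref{eq:enc:trace}, so it does not interact with the error growth), and that dropping the $\Delta^{2}$ terms is legitimate. A secondary subtlety is whether the $D_{xx},D_{yy},D_{xy}$ appearing in the statement denote the true responses or bounds on them — I would read them as stand-ins for upper bounds on $|D_{xx}|,|D_{yy}|,|D_{xy}|$ so that the expression is a genuine numerical bound — and I would note that the ``$-$'' sign in the determinant bound is the pessimistic choice matching the worst-case alignment of the individual errors. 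Everything else is routine triangle-inequality estimation.
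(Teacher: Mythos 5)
Your proposal follows essentially the same route as the paper's proof: perturb each fractional constant by $\Delta$, propagate the first-order error through the Haar responses, apply the product rule to $D_{xx}\cdot D_{yy}$ and $D_{xy}^{2}$, bound the region sums by $B\cdot m\cdot n$, and discard the $\Delta^{2}$ terms. You even reproduce the paper's own looseness on the trace coefficient (strictly, three region sums per Haar response would give $6\cdot\Delta\cdot B\cdot m\cdot n$ rather than $2\cdot\Delta\cdot B\cdot m\cdot n$ unless the combined sum $r_{1}+r_{2}+r_{3}$ is itself bounded by $B\cdot m\cdot n$), so nothing further is needed.
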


\begin{proof}
  The error begins in the Haar pattern calculations (assuming accuracy is limited by $\Delta$):
  \begin{equation}
    (c_1 + \Delta) \cdot r_1 + (c_2 + \Delta) \cdot r_2 + (c_3 + \Delta) \cdot r_3 .
  \end{equation} 
  This means a single Haar pattern will introduce $\Delta \cdot (r_1+r_2+r_3)$ error. 
  Combining these into the determinant equation:
  \begin{eqnarray}
    = (D_{xx} + \Delta \cdot (r_1+r_2+r_3) ) \cdot (D_{yy} + \Delta \cdot (r_1+r_2+r_3) ) \nonumber\\
    - .81 \cdot \left( D_{xy} + \Delta \cdot (r_1+r_2+r_3+r_4) \right)^2\\
    = D_{xx} \cdot D_{xy} - .81 \cdot D_{xy}^2 + \Delta \cdot (r_1+r_2+r_3) \cdot (D_{xx}+D_{yy}) \nonumber\\
    - .81 \cdot (r_1+r_2+r_3+r_4) \cdot (D_{xy} + D_{xy})    .
  \end{eqnarray}
  We can convert the $r_x$s to worst case values of $B \cdot m \cdot n$ to be the max possible value they can be.
  Combining the error terms:
  \begin{equation}
    \Delta \cdot B \cdot m \cdot n \cdot ( 3 D_{xx} + 3 D_{yy} - 0.81 \cdot 8 \cdot D_{xy} ) .
  \end{equation}
  The trace equation is much easier.
  Starting with
  \begin{equation}
    D_{xx} + \Delta \cdot (r_1+r_2+r_3) + D_{yy} + \Delta \cdot (r_1+r_2+r_3) ,
  \end{equation}
  combining terms and using the max bound for the $r_x$s, yields:
  \begin{equation}
    D_{xx} + D_{yy} + 2 \cdot \Delta \cdot B \cdot m \cdot n .
  \end{equation}
\end{proof}

\paragraph{Denominator Bounding}

We need to bound the denominator.
As discussed in Section~\ref{sec:impl:surf}, we strategically selected the denominator in areas to minimize its ability to increase.
At the conclusion of the encrypted processing, we are calculating two values using equations~\ref{eq:enc:det} and~\ref{eq:enc:trace}.
Obviously, our bound needs to be the max of these two.
While it was easy to determine the values of the denominators, the real problem is the FHE scheme's max value.
We generate this Lemma to support the FHE scheme. 
\begin{lemma}
  \label{lemma:denominator}
  Given the SURF implementation and a base denominator of $\; V \;$, the max value of the denominator is $100 \cdot V^2$
\end{lemma}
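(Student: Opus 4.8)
The plan is to track the denominator through each stage of the encrypted SURF pipeline described in Section~\ref{sec:impl:surf} and show that it never exceeds $100 \cdot V^2$. First I would recall the sequence of operations that touch the denominator: the integral image keeps denominator $1$ by construction; the region sums $\mbox{rs}_k$ are pure additions of integral-image entries, so they also have denominator $1$; the denominator is first introduced in the Haar-pattern combinations, where equation~(5) factors out a common $\frac{1}{v}$ with $v = V$, so after this step $D_{xx}$, $D_{yy}$, and $D_{xy}$ each have denominator exactly $V$. This is the crucial "smart decision'' the text already flagged: using a single consistent $v$ for all three constants prevents the denominators from multiplying across the sum.

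Next I would push these through the determinant and trace formulas. For the trace, equation~\ref{eq:enc:trace} is $D_{xx} + D_{yy}$, a sum of two rationals each with denominator $V$, so the trace has denominator $V$, which is comfortably below $100 \cdot V^2$. For the determinant, equation~\ref{eq:enc:det} factors out $\frac{1}{v^2} = \frac{1}{V^2}$ in front of $\bigl( D_{xx,u} \cdot D_{yy,u} - \frac{81}{100} \cdot D_{xy,u} \cdot D_{xy,u} \bigr)$; the numerators $D_{\cdot,u}$ are integers, so the only remaining source of denominator growth inside the parentheses is the constant $\frac{81}{100}$, which contributes a factor of $100$. Multiplying the $V^2$ pulled out front by the $100$ from that constant gives a denominator of $100 \cdot V^2$. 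Taking the max of the trace bound $V$ and the determinant bound $100 \cdot V^2$ yields $100 \cdot V^2$, as claimed.

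The main obstacle — really the only subtle point — is justifying that the two $D_{xy,u}$ factors and the $D_{xx,u}$, $D_{yy,u}$ factors genuinely share the single extracted denominator $V$ rather than each carrying their own, which would blow the bound up to $V^4$ or worse. This is exactly why the construction in Section~\ref{sec:impl:surf} insists on a consistent $v$ across all Haar-pattern constants and across octaves/layers at the point where the determinant is formed; I would state this as the key lemma-internal observation and lean on equations~(5) and~\ref{eq:enc:det} to make it precise. A secondary bookkeeping point is that additions of equal-denominator rationals do not change the denominator (only the numerator grows, which is the concern of the numerator bound, not this lemma), so every additive step in the pipeline is denominator-neutral. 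With those two facts in hand the proof is just a short walk down the pipeline taking a maximum at the end.
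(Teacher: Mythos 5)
Your overall strategy is the same as the paper's: walk down the pipeline, track the denominator at each stage, and take the maximum over the trace and determinant outputs; your determinant analysis (factor out $V^2$ from the common Haar-pattern denominator, pick up the residual factor of $100$ from the constant $\tfrac{81}{100}$) matches the paper's proof exactly. However, one step of your argument rests on a premise that is false for this representation. You assert that ``additions of equal-denominator rationals do not change the denominator'' and conclude that the trace $D_{xx}+D_{yy}$ has denominator $V$. In the encrypted rational format the denominator is itself a ciphertext that cannot be inspected, compared, or reduced --- the paper flags this explicitly as the format's major disadvantage --- so a generic addition $u_1 v_1^{-1} + u_2 v_2^{-1}$ must be implemented as $(u_1 v_2 + u_2 v_1)\,(v_1 v_2)^{-1}$ even when $v_1 = v_2$. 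Denominator-neutral addition is available only where the implementation factors the common denominator out in the clear, as in the factored Haar-pattern sum and in Eq.~\ref{eq:enc:det}; the paper does not do this for the trace (Eq.~\ref{eq:enc:trace} is applied directly, with ``no denominator movement''), and its proof accordingly assigns the trace a final denominator of $V^2$, not $V$. Since $\max(V^2, 100\cdot V^2) = 100\cdot V^2$, your stated bound survives, but the justification for the trace branch should be corrected: either carry the $V^2$ through, or explicitly argue that the common $V$ is factored out of the trace sum as well, which is not what the construction in Section~\ref{sec:impl:surf} describes.
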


\begin{proof}  
The integral image denominator is the same for the entire process which is maintained for the region summations.
The first strategic decision we made was to use a common denominator and move it out of the additions to minimize the denominator increase.
Therefore when a Haar pattern has been computed, there will be a denominator set to a value $V$.
From the trace equation (Eq.~\ref{eq:enc:trace}): knowledge of rational calculations and a common denominator of $V$ leads to a final denominator of $V^2$.
Working on the determinant equation, we strategically move the $V^2$ out of the computations (using the common denominator from Haar patterns).
The only remaining denominator is the $100$ from the constant.
This gives us a final denominator of $100 \cdot V^2$.
\end{proof}

\paragraph{Numerator Bounding}

Our next focus is to bound the numerator of the determinant and trace equations (\ref{eq:enc:det} and~\ref{eq:enc:trace}).

\begin{lemma}
  \label{lemma:numerator}
  Given the SURF implementation, the max value of the numerator is $1296 \cdot B^2 \cdot m^2 \cdot n^2$, where $B$ is the bound on the integer values of the image, $m \times n$ is the image size. 
\end{lemma}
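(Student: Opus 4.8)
The plan is to follow the largest integer that can ever occupy a numerator slot through the two quantities that are actually decrypted, namely the determinant of Eq.~\ref{eq:enc:det} and the trace of Eq.~\ref{eq:enc:trace}, and then take the maximum. The determinant will dominate, since it is the only place where two image‑scale quantities get multiplied together.

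First I would bound a single region sum. By construction $\mbox{rs}_k$ is the sum of the original integer pixel values over an axis‑aligned rectangle contained in the $m\times n$ image; there are at most $mn$ such pixels and each is bounded by $B$, so $|\mbox{rs}_k|\le B\cdot m\cdot n$. This is the same worst‑case substitution already used in Lemma~\ref{lemma:error}. Next I would bound the numerator of each Haar response $D_{xx,u}$, $D_{yy,u}$, $D_{xy,u}$: after the consistent‑denominator trick of Section~\ref{sec:impl:surf} these are integer combinations $\sum_i c_{i,u}\,\mbox{rs}_i$ of three (for $D_{xx}$, $D_{yy}$) or four (for $D_{xy}$) region sums whose weights $c_{i,u}$ are the box‑filter coefficients, with absolute values $1,2,1$ and $1,1,1,1$ respectively — each set summing to $4$. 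Hence $|D_{xx,u}|,|D_{yy,u}|,|D_{xy,u}|\le 4\,B\,m\,n$.

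Then I would substitute into Eq.~\ref{eq:enc:det}, treating the extracted $v^2$ as already accounted for by Lemma~\ref{lemma:denominator}. The numerator held by the product term $D_{xx,u}\cdot D_{yy,u}$ is at most $16\,B^2 m^2 n^2$, while the numerator held by the $\tfrac{81}{100}\,D_{xy,u}\cdot D_{xy,u}$ term is $81\,D_{xy,u}^2\le 81\cdot 16\,B^2 m^2 n^2 = 1296\,B^2 m^2 n^2$. The latter is the larger of the two and also dwarfs the linear‑size numerators arising in the trace equation Eq.~\ref{eq:enc:trace}, so the maximum numerator over the whole pipeline is $1296\,B^2 m^2 n^2$, which is the claimed bound $1296 = 81\cdot 4^2$.

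The main obstacle here is bookkeeping rather than depth. One has to fix the convention for what "the numerator" means — the numerator of each individual rational as it sits in a ciphertext (which gives exactly $1296\,B^2 m^2 n^2$, matching the $100\,V^2$ denominator of Lemma~\ref{lemma:denominator} for that same term) versus the numerator of the fully combined fraction $100\,D_{xx,u}D_{yy,u}-81\,D_{xy,u}^2$, which would be somewhat larger — and one must check that the constants $c_{i,u}$ really carry only the small box‑filter weights, with all scale normalization pushed into the denominator $v$, so that the factor $4$ (hence $16$ and $1296$) is valid for every octave and layer.
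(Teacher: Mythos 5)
Your proposal follows the same skeleton as the paper's proof: bound each region sum by $B\cdot m\cdot n$, bound the numerators of $D_{xx}$, $D_{yy}$, $D_{xy}$ as weighted combinations of region sums, substitute into the determinant numerator while treating its two terms separately, and observe that the $81\cdot D_{xy}^2$ term dominates at $1296\cdot B^2\cdot m^2\cdot n^2$ while the trace is negligible. The final constant and its factorization $1296 = 81\cdot 4^2$ via the $D_{xy}$ term are exactly the paper's conclusion.

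However, the two derivations reach that number by mutually incompatible bookkeeping, and yours contains a tension you only partly defuse. The paper bounds $D_{xx}$ and $D_{yy}$ by $3\cdot B\cdot m\cdot n$ (three region sums, implicitly unit weights) and writes the combined numerator as $D_{xx}\cdot D_{yy}\cdot 100 - 81\cdot D_{xy}^2$ (Eq.~\ref{eq:det:numerator}), so its first term is $900\cdot B^2 m^2 n^2 < 1296\cdot B^2 m^2 n^2$. You instead take the box-filter weights to be $1,2,1$, giving the larger bound $4\cdot B\cdot m\cdot n$ on $D_{xx}$ and $D_{yy}$, but then drop the factor $100$ from the $D_{xx,u}\cdot D_{yy,u}$ term. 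Those two deviations happen to cancel in the sense that both routes leave $1296$ as the maximum, but they cannot both be consistent with the implementation: if the middle weight really is $2$ and the common-denominator representation really forces the $100$ onto the first term (which it must, for the final denominator to be $100\cdot V^2$ as in Lemma~\ref{lemma:denominator}), then that term's numerator is $1600\cdot B^2 m^2 n^2$ and the lemma's stated constant is too small. Conversely, if the numerator weights are bounded by $1$ in magnitude (the paper's implicit assumption), your factor of $4$ for $D_{xx}$ is unjustified even though the conclusion survives. You flag this as a "convention" question in your closing paragraph, but it is not merely conventional: under the paper's own Eq.~\ref{eq:det:numerator} the two readings yield different maxima, and your proof as written does not establish which one holds, so the step from your Haar-pattern bounds to the final constant is not yet airtight.
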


\begin{proof}
  Again using Eqs.~\ref{eq:enc:det} and~\ref{eq:enc:trace}, we identify bounds separately. 
  This requires determining the max values of the integral image, region sums, and Haar patterns.
  We can independently work on bounding each individual Haar patterns and then going back to the region sums and integral image.
  The trace numerator is
  \begin{equation}
    D_{xx} \cdot V + D_{yy} \cdot V .
  \end{equation}
  The determinant numerator is:
  \begin{equation}
    \label{eq:det:numerator}
    D_{xx} \cdot D_{yy} \cdot 100 - 81 \cdot D_{xy} \cdot D_{xy} .
  \end{equation}
  In both cases, we need to determine the bounds for each of the Haar patterns.
  $D_{xx}$ and $D_{yy}$ will have the same bound and $D_{xy}$ will be very similar.
  The numerator of the $D_{xx}$ is
  \begin{equation}
    \label{eq:lemmanum:sum}
    c_1 \cdot r_1 + c_2 \cdot r_2 + c_3 \cdot r_3 .
  \end{equation}
  
  For the integral image, an image of size $m \times n$ drives $O(mn)$ calculations. 
  Using $B$ as the bound on the integer space, we have $B \cdot m \cdot n$ as the bound of the max value. 
  Each point ($r_x$) will be bounded above by the actual location; note that the constants $c_x$ will add up to zero but the summation in Eq.~\ref{eq:lemmanum:sum} will not be zero.
  Since we are looking for the worst case, the region sums will be just the size of the region. 
  To bound this from above, we will assume the constants are positive so that the region is a combined sum.
  This means a less tight bound overall. 
  We will designate the size of the region in the worst case to be less than $m\cdot n$; so that we can use this value as a bound. 
  This gives us a bound on the $D_{xx}$ ($D_{yy}$):
  \begin{equation}
    3 \cdot B \cdot m \cdot n .
  \end{equation}
  $D_{xy}$ is very similar:
  \begin{equation}
    4 \cdot B \cdot m \cdot n .
  \end{equation}

  We can combine these two results into Eq.~\ref{eq:det:numerator}:
  \begin{equation}
    \label{eq:lemmanum:num}
    \left( 3 \cdot B \cdot m \cdot n \right) \cdot \left( 3 \cdot B \cdot m \cdot n \right) \cdot 100 - 81 \cdot \left( 4 \cdot B \cdot m \cdot n \right)^2 ,
  \end{equation}
  where we keep the two sides separate as if either one goes over the limit the calculation will fail.
  Splitting Eq.~\ref{eq:lemmanum:num}, we obtain
  \begin{eqnarray}
    900 \cdot B^2 \cdot m^2 \cdot n^2  \\
    1296 \cdot B^2 \cdot m^2 \cdot n^2 .
  \end{eqnarray} 
  Obviously, the second equation will be the larger of the two and will be the one that absolutely needs to be satisfied.
  Thus we have a bound on the numerator. 
\end{proof}

\paragraph{Modulus Theorem}

The importance of the following theorem is two-fold: first is identifying error in the output of the scheme (accuracy) and second is bounding the framework to properly select values to ensure proper output.
A user can select a modulus $q$ and a rational fraction denominator $V$ based on the size of the input image while considering the need of SURF's fractional values. 

\begin{theorem}
  FHE implementation of SURF correctly calculates the scale-space pyramid with accuracy
  \begin{equation*}
    \Delta \cdot B \cdot m \cdot n \cdot ( 3 D_{xx} + 3 D_{yy} - 0.81 \cdot 8 \cdot D_{xy} )
  \end{equation*}
  given that the following holds true:
  \begin{eqnarray*}
    100 \cdot V^2 & < & \frac{q}{2} \\
    1296 \cdot B^2 \cdot m^2 \cdot n^2 & < & \frac{q}{2}
  \end{eqnarray*}
\end{theorem}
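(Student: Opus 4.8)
The plan is to assemble the theorem directly from the three preceding lemmas, treating the accuracy clause and the correctness clause separately. First I would observe that the accuracy figure in the statement is literally the determinant error bound established in Lemma~\ref{lemma:error}. Since the scale-space pyramid stores determinants (Eq.~\ref{eq:enc:det}) and traces (Eq.~\ref{eq:enc:trace}), and the trace error $2\cdot\Delta\cdot B\cdot m\cdot n$ from the same lemma is dominated by the determinant error, no further work is needed on accuracy beyond citing Lemma~\ref{lemma:error} and noting this domination.

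Next I would address correctness, which reduces to showing that no plaintext integer handled by the scheme ever leaves the faithfully representable range of $\mathbb{Z}_q$. Because GSW2013 encrypts integers modulo $q$ and subtraction is realized via two's complement, the representable integers form the symmetric interval $(-q/2, q/2)$. A rational value is carried as a numerator/denominator pair, so I would demand that \emph{both} components stay strictly inside that interval at every step. I would then invoke Lemma~\ref{lemma:denominator} to cap the denominator at $100\cdot V^2$ and Lemma~\ref{lemma:numerator} to cap the numerator at $1296\cdot B^2 m^2 n^2$; the two displayed hypotheses of the theorem assert precisely that each cap lies below $q/2$. Since the trace equation's numerator and denominator bounds ($D_{xx}V + D_{yy}V$ and $V^2$) are strictly smaller than the corresponding determinant bounds, the same two inequalities cover the trace, so verifying the determinant case is enough.

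Finally I would knit these together: under the two inequalities, every plaintext integer processed lies in $(-q/2,q/2)$, so decryption recovers the intended numerator and denominator exactly, the rational value $u\cdot v^{-1}$ is reconstructed without modular wraparound, and the only discrepancy from an exact floating-point computation is the conversion error quantified in Lemma~\ref{lemma:error} — which is the claimed accuracy.

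The step I expect to be the main obstacle is the correctness argument rather than the arithmetic. One must check that ``numerator and denominator each fit in $(-q/2,q/2)$'' is genuinely sufficient for a rational $u\cdot v^{-1}$ to be reconstructed unambiguously (in particular that the modular inversion of $v$ does not alias two distinct intended rationals), and that the worst-case bounds of Lemmas~\ref{lemma:denominator} and~\ref{lemma:numerator}, derived for the \emph{final} outputs, also dominate every \emph{intermediate} ciphertext: the integral-image partial sums, the region sums $\mbox{rs}_k$, and the un-normalized Haar responses. Making this monotone-growth claim precise, by re-reading the construction of Section~\ref{sec:impl:surf} to confirm the denominator grows only at the two identified points and the numerator is maximized at the determinant, is where the real care is needed.
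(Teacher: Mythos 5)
Your proposal matches the paper's proof essentially step for step: the accuracy clause is discharged by citing Lemma~\ref{lemma:error}, and the two inequalities come from combining the denominator cap of Lemma~\ref{lemma:denominator} and the numerator cap of Lemma~\ref{lemma:numerator} with the two's-complement requirement that plaintext integers stay below $q/2$. The additional concerns you flag at the end --- that the final-output bounds should also dominate intermediate ciphertexts, and that modular reconstruction of $u \cdot v^{-1}$ must be unambiguous --- are legitimate and are in fact left implicit by the paper's own (quite terse) proof, so your version is if anything slightly more careful than the original.
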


\begin{proof}
  Lemma~\ref{lemma:error} proves the accuracy equation. 
  Next, the numerator and denominator need to be bounded within the modulus ring of the FHE scheme.
  If the numerator or denominator exceeds the modulus ring, the framework will not work. 
  Remember that $q$ is the modulus of the ring and that negative numbers are enabled by two's complement, so the numerator and denominator must be less than $\frac{q}{2}$.
  In Lemmas~\ref{lemma:denominator} and~\ref{lemma:numerator} we determined the max value of the numerator and denominator of the final determinant value.
  Using the modulus bound we have:
  \begin{eqnarray}
    100 \cdot V^2 & < & \frac{q}{2} \\
    1296 \cdot B^2 \cdot m^2 \cdot n^2 & < & \frac{q}{2}
  \end{eqnarray}
  .
\end{proof}

\section{Results}
\label{sec:results}

We now wish to discuss the results of implementing the Secure SURF.
We have two focuses: first is on how well the keypoints can be extracted from the scale space pyramid.
Second is the time and space complexity of the implementation of FHE and thus SURF.

\paragraph{Keypoint Comparison}

Since we are obtaining SURF keypoints from a scale space pyramid computed in the encrypted domain, we need to compare them against an unencrypted version. 
For our test cases, we used a set of 11 images scaled down to $32 \times 32$ and $64 \times 64$ in size.
This arbitrary scaling was done to show the capability of the framework.
Our tests were setup with the FHE scheme having a ring modulus q of $256^7$ and we set $V = 10000$.

\begin{figure}
  \centering
  \includegraphics[width=\tomwidth\textwidth,height=\tomheight\textheight]{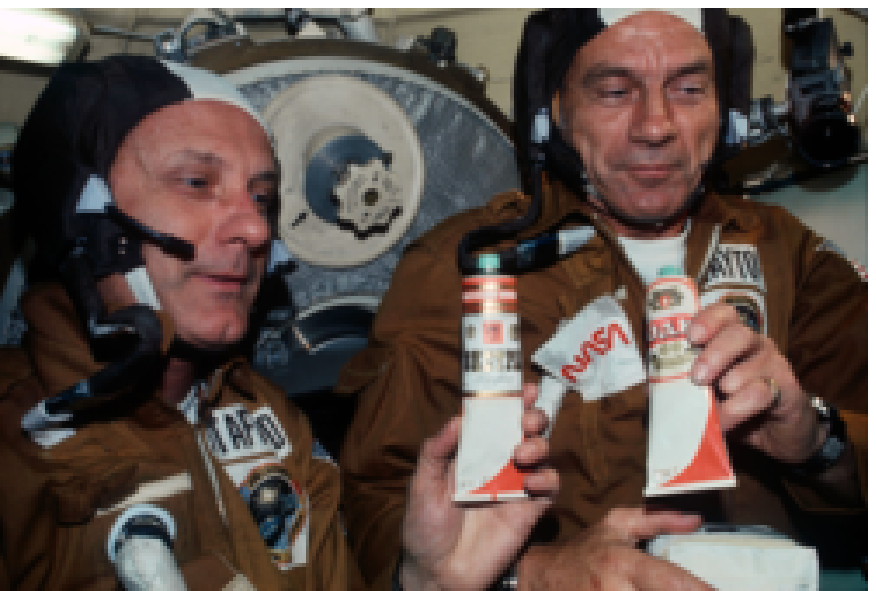}
  \hfill
  \includegraphics[width=\tomwidth\textwidth,height=\tomheight\textheight]{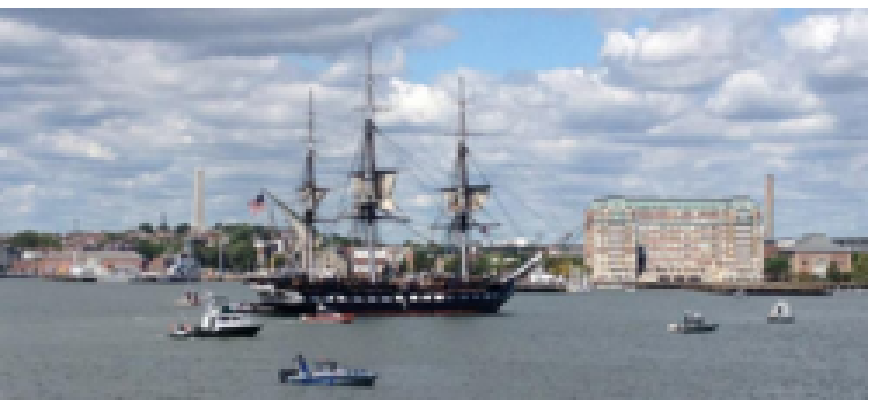}
  \hfill
  \includegraphics[width=\tomwidth\textwidth,height=\tomheight\textheight]{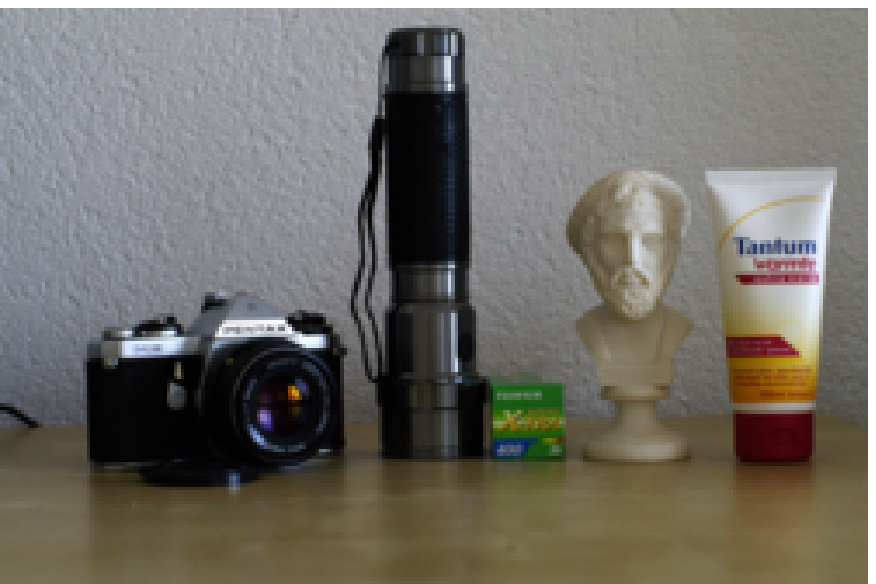}
  \hfill
  \includegraphics[width=\tomwidth\textwidth,height=\tomheight\textheight]{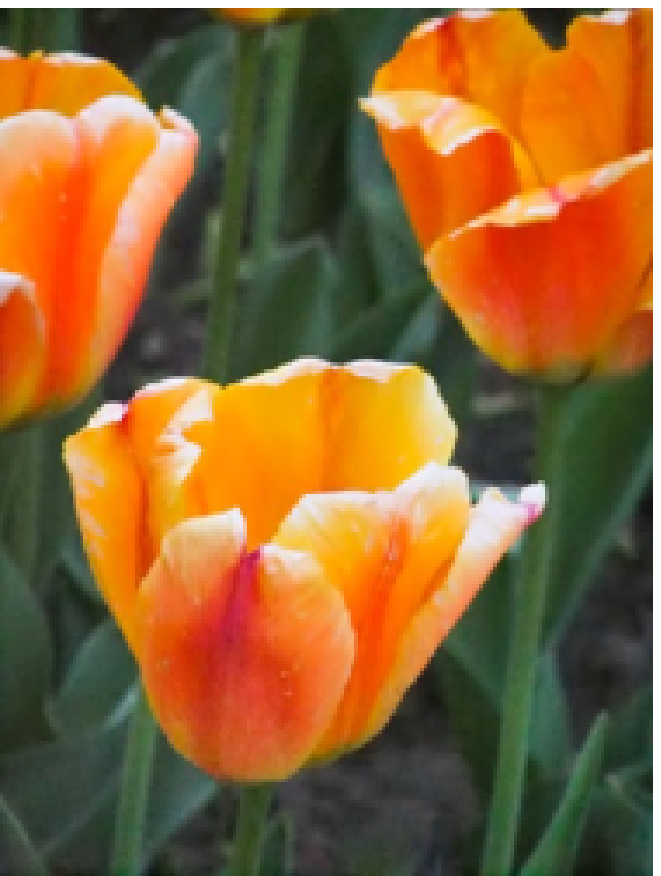}
  \hfill
  \includegraphics[width=\tomwidth\textwidth,height=\tomheight\textheight]{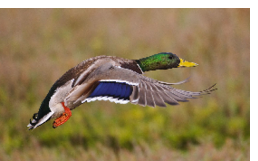}
  \hfill
  \includegraphics[width=\tomwidth\textwidth,height=\tomheight\textheight]{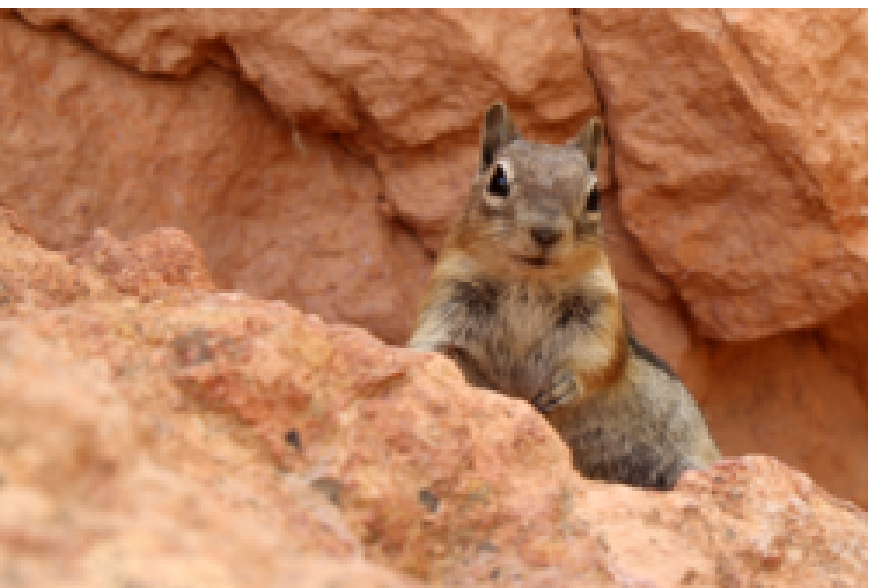}
  \hfill
  \includegraphics[width=\tomwidth\textwidth,height=\tomheight\textheight]{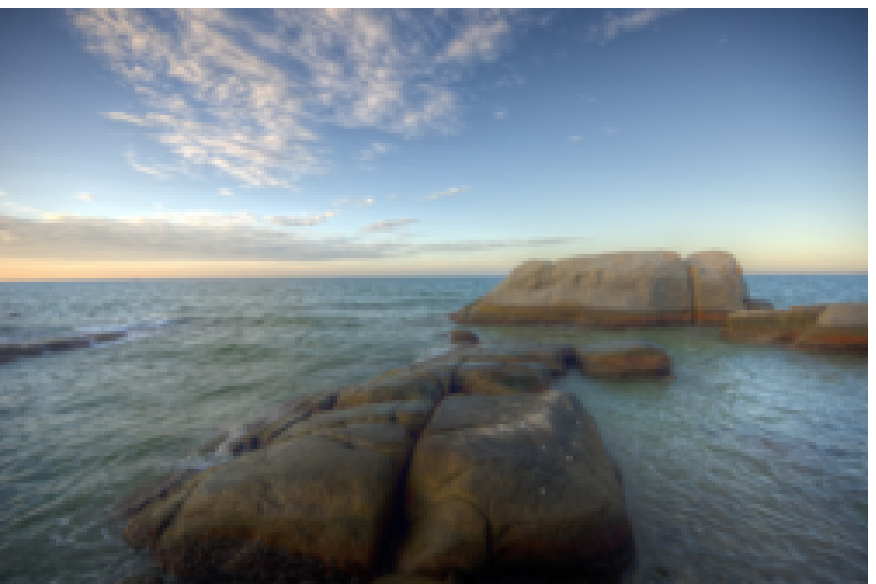}
  \hfill
  \includegraphics[width=\tomwidth\textwidth,height=\tomheight\textheight]{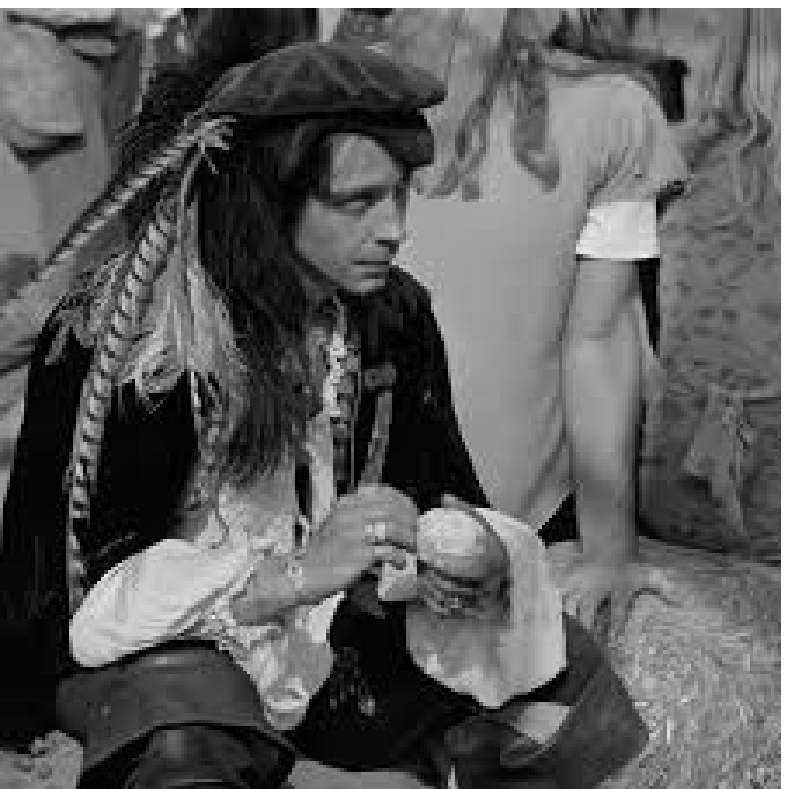}
  \hfill
  \includegraphics[width=\tomwidth\textwidth,height=\tomheight\textheight]{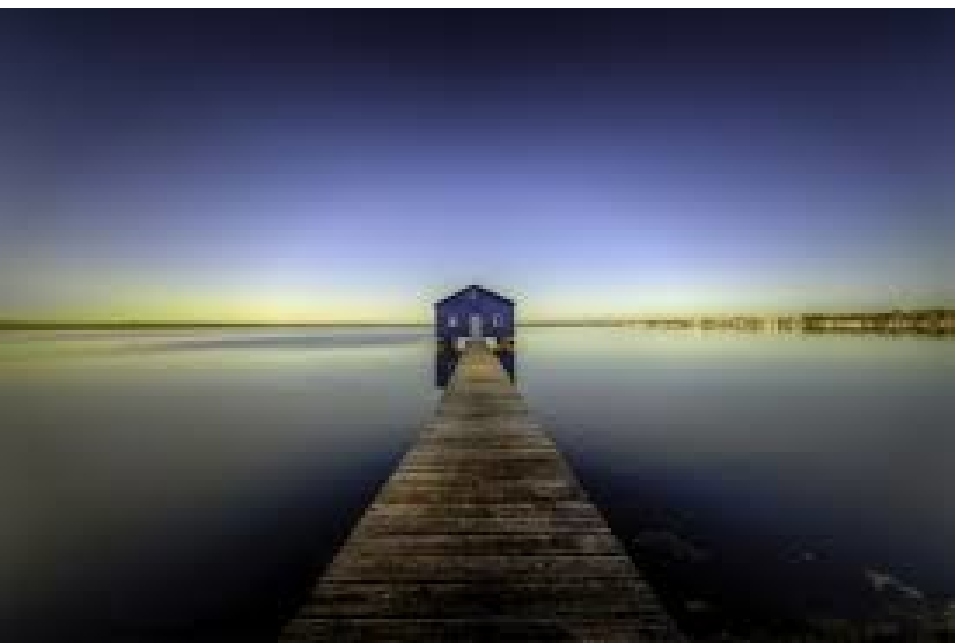}
  \hfill
  \includegraphics[width=\tomwidth\textwidth,height=\tomheight\textheight]{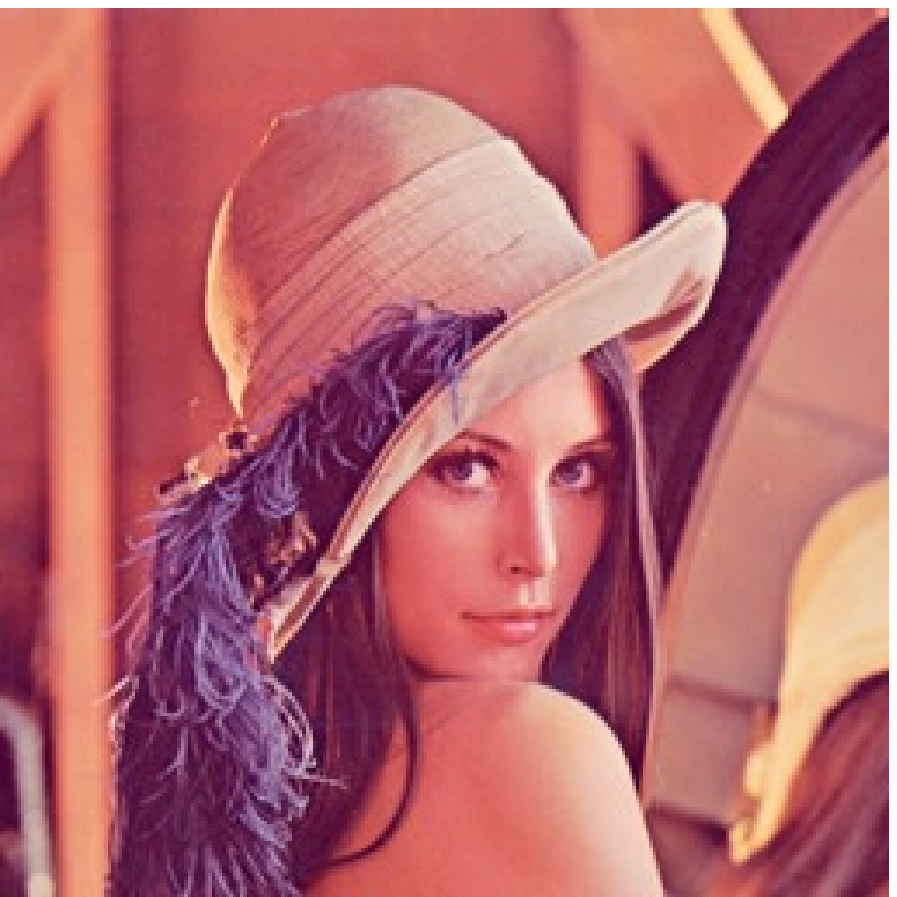}
  \hfill
  \includegraphics[width=\tomwidth\textwidth,height=\tomheight\textheight]{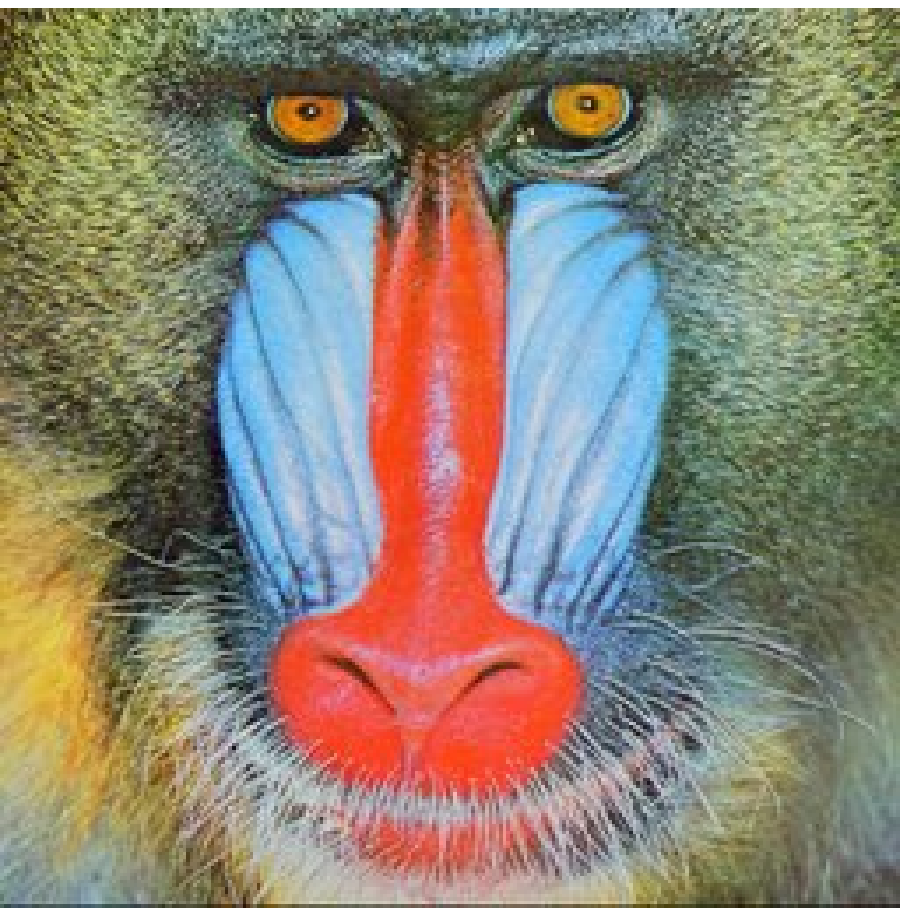}
  \caption{Set of 11 Test Images}
  \label{fig:images}
\end{figure}

We summarize our results in Table~\ref{table:keypoints}.
In the $32\times 32$ case, we saw that in ten out of the eleven images the equivalent keypoints were found (a few not having keypoints to be found).
Investigating the only image which the encrypted algorithm found a feature while the encrypted algorithm did not, we observed that the neighborhood of the keypoint there was another location that had a value very close to the keypoint's response value.
Figure~\ref{fig:intensity} graphically displays intensities of the neighborhood.
Clearly the point in the top left corner from the layer below is showing to be very close to the center point.
We believe this to be introduced by the rational conversion error, but it brings in the question of the stability of the unencrypted keypoint. 
As we moved into $64\times 64$ images, we were able to see more differences caused by the initial rational error.
If we sum up the total numbers compared to the original image, we see that we are accurately computing about $82.5\%$ of the original points.

  \begin{table}
    \centering
    \caption{Results of Keypoint Comparison}
    \begin{tabular}{|llccccccccccc|}
      \hline
      Size & Type & 1 & 2 & 3 & 4 & 5 & 6 & 7 & 8 & 9 & 10 & 11 \\
      \hline
      32x32 & Un. & 1 & 0 & 0 & 0 & 0 & 1 & 1 & 0 & 1 & 1 & 1 \\
      & Enc. & 1 & 0 & 1 & 0 & 0 & 1 & 1 & 0 & 1 & 1 & 1 \\
      \hline
      64x64 & Un. & 22 & 16 & 15 & 22 & 22 & 22 & 7 & 0 & 1 & 25 & 19 \\
       & Enc. & 19 & 14 & 11 & 19 & 19 & 22 & 8 & 0 & 2 & 20 & 17 \\
      \hline
    \end{tabular}
    \label{table:keypoints}
  \end{table}

\begin{figure}
  \centering

  \includegraphics[scale=0.20,angle=270]{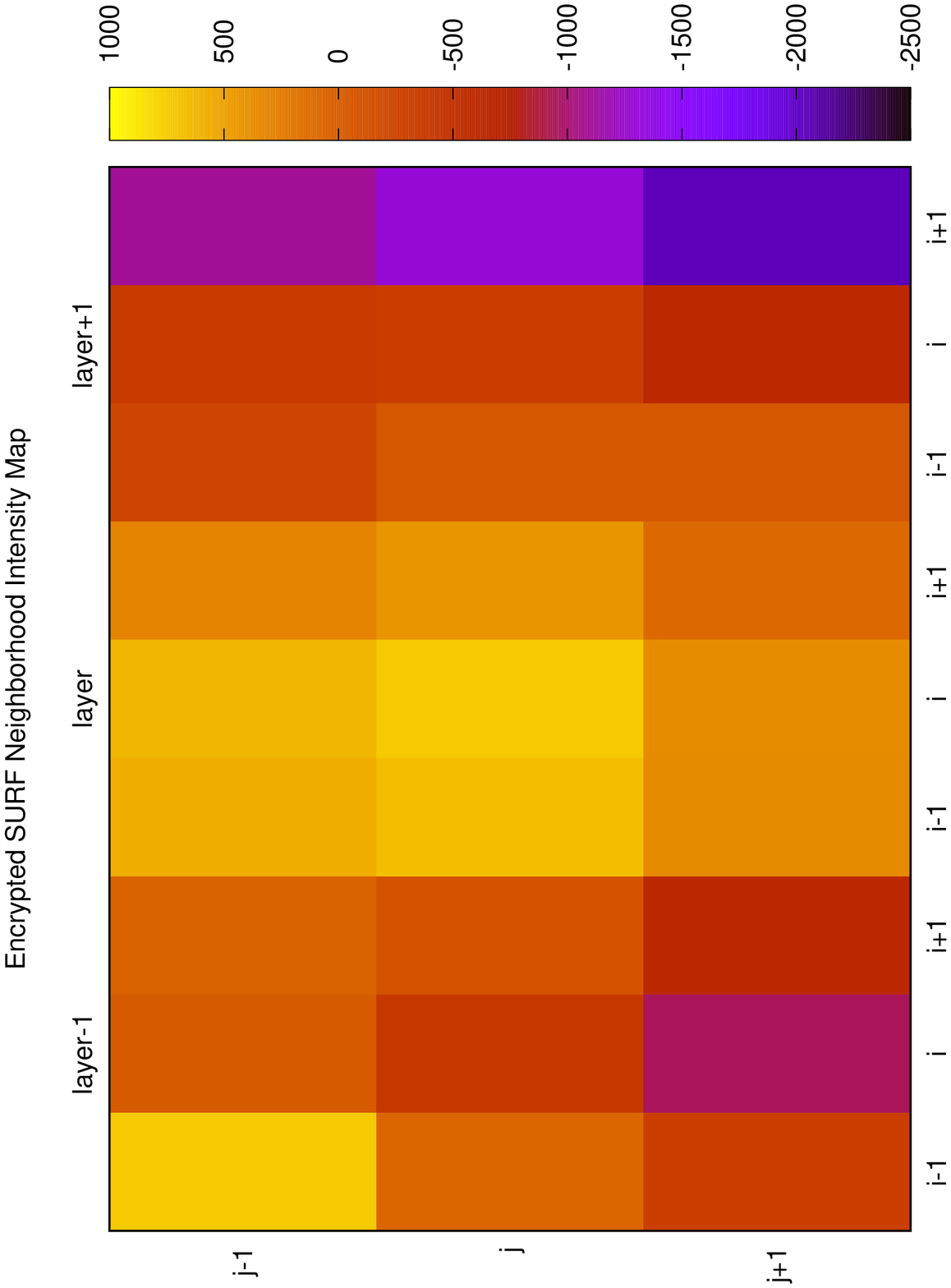}
  \includegraphics[scale=0.20,angle=270]{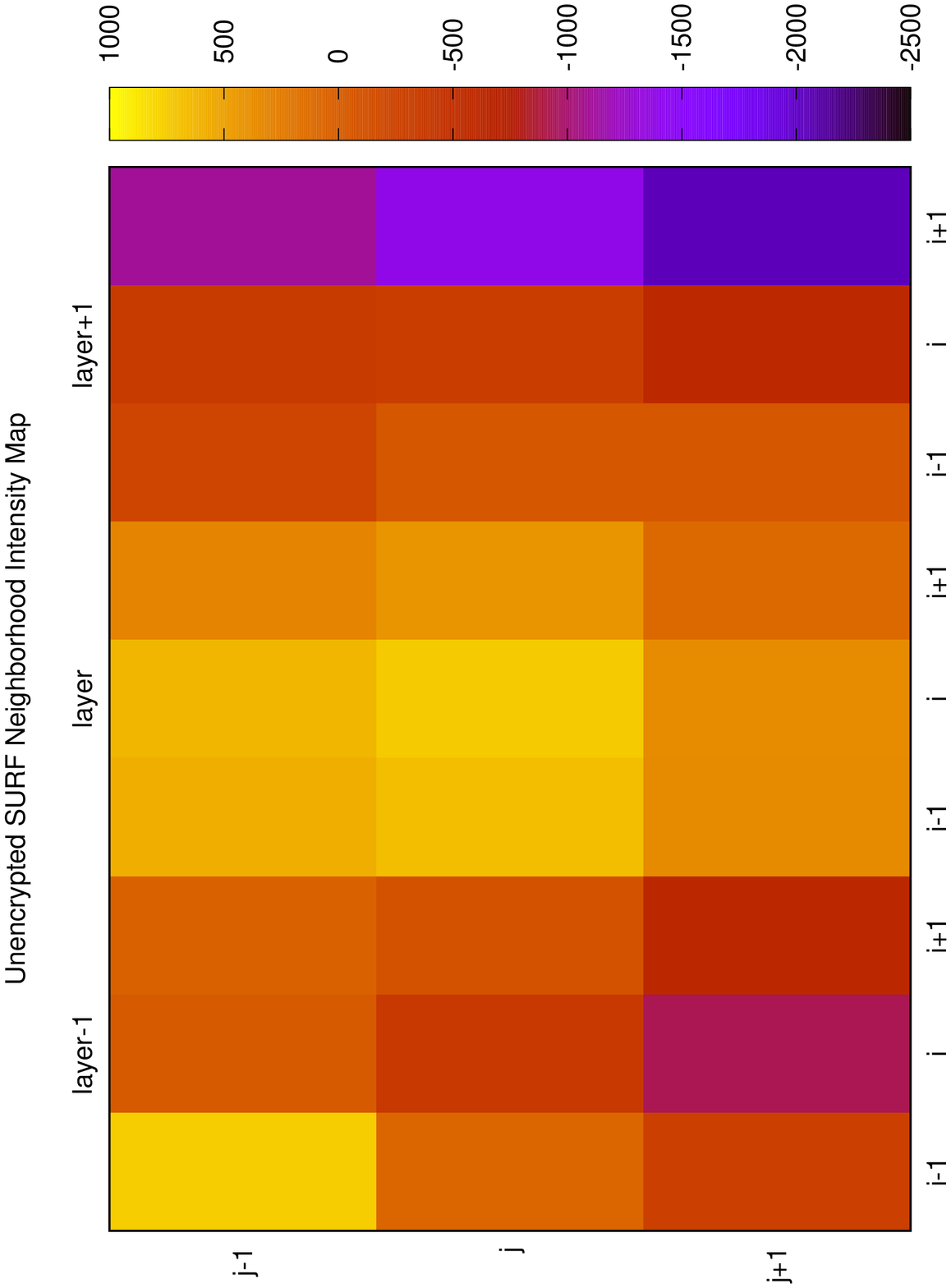}

  \caption{Intensity Map of the Encrypted and Unencrypted section of Image 3}
  \label{fig:intensity}
\end{figure}

\paragraph{Time/Space Complexity}

FHE is known to have poor time and space complexity.
A single ciphertext is represented by a $O(N^2)$ matrix ($N$ is defined by FHE scheme).
This size drives the time complexity of the FHE operations.
FHE addition is a matrix-matrix addition that is a $O(N^2)$ operation.
Multiplication is matrix-matrix multiplication which is a $O(N^3)$ process.
Since our scheme is mostly using additions and multiplications, we are starting with a $O(N^3)$ process.
One final note about subtraction is that a constant multiplication (-1) is performed, so a subtraction is a $O(N^3)$ as well. 

Starting with space complexity, we must remember we are working with an image.
This means we need space $O(m n N^2)$ to store a single image.
For the scale space pyramid, we know that each image can be at least $m \times n$.
However, when moving through the octaves the size decreases by powers of two.
We can bound the space complexity on an upper bound by $O(o\cdot l \cdot m \cdot n \cdot N^2)$, where we have used $o$ to indicate the number of octaves and $l$ to be the number of layers per octave.
We note that the constant multiplier to this bound is a fractional value. 

Looking at the time complexities, we have two processes to consider: the integral image and the scale space pyramid.
Generating of the integral image is a $O(mn)$ process that uses two additions and a subtraction for its calculations.
This will yield a $O(mnN^3)$ time complexity.
Moving on to the scale space pyramid.
We know from the space complexity that there are $O(o\cdot l \cdot m \cdot n)$ points to calculate.
Each point is computed for both the determinant and trace via Haar patterns.
Each of these is a constant number of combination of additions, subtractions, and multiplications.
This means the computations in time are $O(N^3)$.
For the scale-space pyramid, this means a time complexity of $O(o\cdot l \cdot m \cdot n \cdot N^3)$.

The space and time complexities need to be dealt with. 
Another important aspect of SURF is that it can be easily parallelized.
Each point of the scale-space pyramid can be computed independently once the integral image has been calculated.
Another parallelizable part of this framework is the matrix-matrix multiplication of the FHE scheme (we have successfully be able to use GPUs to significantly improve running time).
This enables an additional method to improve the scheme's overall running time.
As a concrete example, we can compute a $64\times 64$ image in under eight hours by using a 4-CPU Desktop computer with a GPU. 
An important takeaway of this section is that complexity of FHE demands only a necessary amount of computations.

\section{Conclusions}
\label{sec:conclusions}

Performing SURF in the encrypted domain provides a method for a user to offload their computations.
Moving this computation to a cloud computer enables other potential actions to be taken post the SURF process.
One problem we did not solve was the ability to identify the keypoints in the encrypted domain and be able to interpolate these points.
Other feature extraction algorithms could potentially be computed in FHE as a result of this as well.

\bibliographystyle{abbrv}
\bibliography{surfconferencepaper}

\end{document}